\documentclass[acmtog]{acmart}
\acmSubmissionID{336}

\usepackage{booktabs} %

\citestyle{acmauthoryear}

\usepackage[ruled]{algorithm2e} %
\usepackage{thmtools}
\usepackage{thm-restate}
\usepackage{listings}
\usepackage[capitalize]{cleveref}
\crefname{section}{Sec.}{Secs.}
\Crefname{section}{Section}{Sections}
\Crefname{table}{Table}{Tables}
\crefname{table}{Tab.}{Tabs.}
\Crefname{figure}{Figure}{Figures}
\crefname{figure}{Fig.}{Figs.}

\usepackage{siunitx}
\usepackage{multirow,tabularx}

\SetAlFnt{\small}
\SetAlCapFnt{\small}
\SetAlCapNameFnt{\small}
\SetAlCapHSkip{0pt}

\setcopyright{rightsretained}
\acmJournal{TOG}
\acmYear{2024}
\acmVolume{43}
\acmNumber{4}
\acmArticle{107}
\acmMonth{7}
\acmDOI{10.1145/3658156}

\begin{document}
\title{3Doodle: Compact Abstraction of Objects with 3D Strokes}

\author{Changwoon Choi}
\orcid{0000-0001-5748-6003}
\affiliation{%
  \institution{Seoul National University (ECE)}
  \country{South Korea}}
\email{changwoon.choi00@gmail.com}
\author{Jaeah Lee}
\orcid{0009-0004-2648-8523}
\affiliation{%
  \institution{Seoul National University (IPAI)}
  \country{South Korea}
}
\email{hayanz@snu.ac.kr}
\author{Jaesik Park}
\orcid{0000-0001-5541-409X}
\affiliation{
    \institution{Seoul National University (CSE, IPAI)}
    \country{South Korea}
}
\email{jaesik.park@snu.ac.kr}
\author{Young Min Kim}
\orcid{0000-0002-6735-8539}
\affiliation{
    \institution{Seoul National University (ECE, IPAI)}
    \country{South Korea}
}
\email{youngmin.kim@snu.ac.kr}

\begin{abstract}
While free-hand sketching has long served as an efficient representation to convey characteristics of an object, they are often subjective, deviating significantly from realistic representations. Moreover, sketches are not consistent for arbitrary viewpoints, making it hard to catch 3D shapes.
We propose 3Dooole, generating descriptive and view-consistent sketch images given multi-view images of the target object. Our method is based on the idea that a set of 3D strokes can efficiently represent 3D structural information and render view-consistent 2D sketches. We express 2D sketches as a union of view-independent and view-dependent components. 3D cubic B\'ezier curves indicate view-independent 3D feature lines, while contours of superquadrics express a smooth outline of the volume of varying viewpoints.
Our pipeline directly optimizes the parameters of 3D stroke primitives to minimize perceptual losses in a fully differentiable manner. 
The resulting sparse set of 3D strokes can be rendered as abstract sketches containing essential 3D characteristic shapes of various objects. We demonstrate that 3Doodle can faithfully express concepts of the original images compared with recent sketch generation approaches.\footnote{Code available at \url{https://github.com/changwoonchoi/3Doodle}}
\end{abstract}

\begin{CCSXML}
<ccs2012>
<concept>
<concept_id>10010147.10010371.10010396.10010399</concept_id>
<concept_desc>Computing methodologies~Parametric curve and surface models</concept_desc>
<concept_significance>300</concept_significance>
</concept>
<concept>
<concept_id>10010147.10010371.10010372.10010375</concept_id>
<concept_desc>Computing methodologies~Non-photorealistic rendering</concept_desc>
<concept_significance>500</concept_significance>
</concept>
</ccs2012>
\end{CCSXML}

\ccsdesc[300]{Computing methodologies~Parametric curve and surface models}
\ccsdesc[500]{Computing methodologies~Non-photorealistic rendering}

\keywords{3D sketch lines, 3D strokes, differentiable rendering}

\begin{teaserfigure}
    \centering
    \includegraphics[trim={5mm 50mm 5mm 75mm}, clip, width=0.95\textwidth]{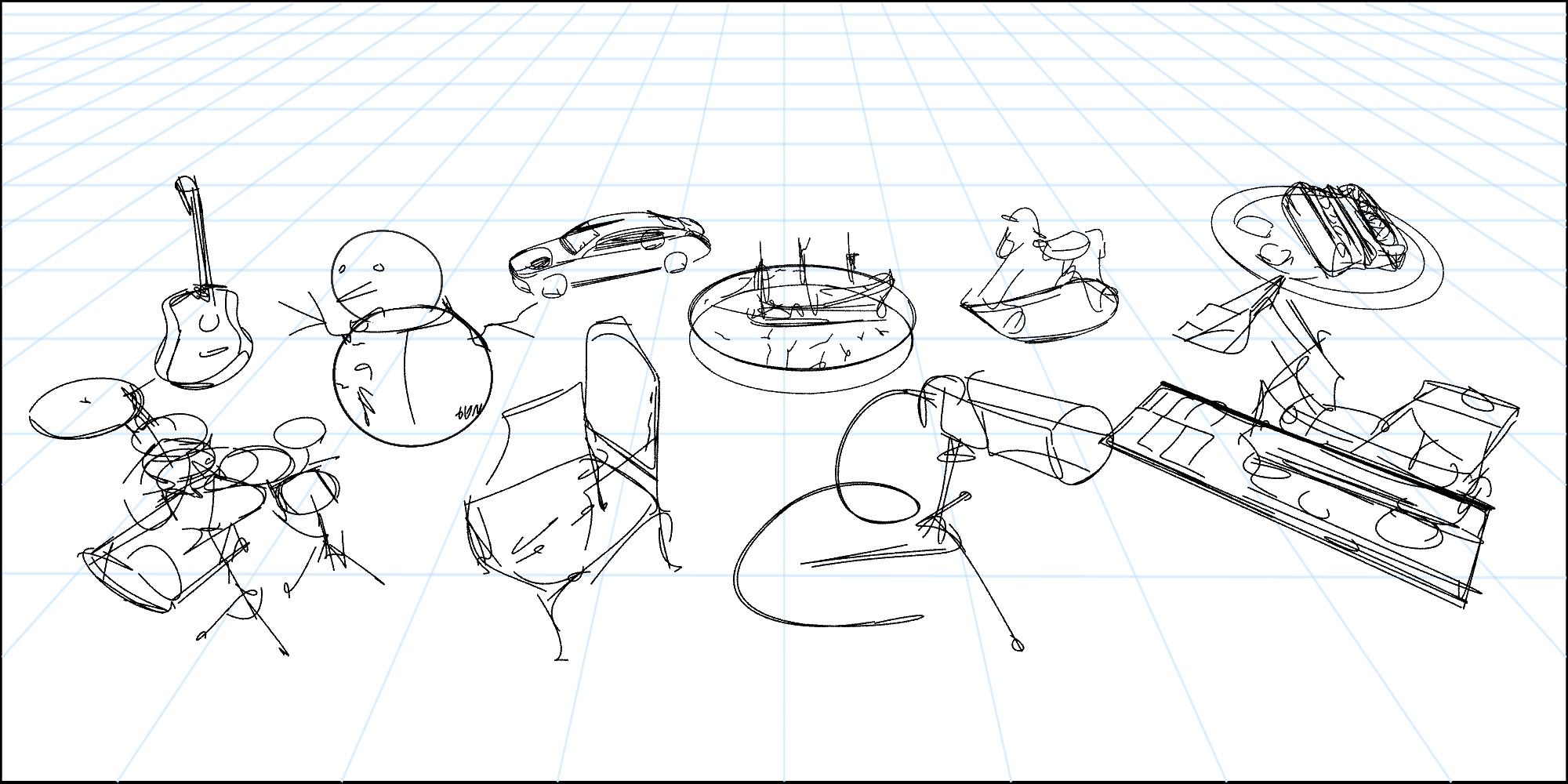}
    \caption{We propose 3Doodle, a method to draw 3D strokes from multi-view images. Our approach can express the target object with a highly compact set of geometric primitives, and the recovered 3D strokes are rendered to generate view-coherent sketches.
    }
    \label{fig:teaser}
\end{teaserfigure}

\maketitle

\section{Introduction}

Free-hand sketching is an effective tool for visual communication, as only a few sketch lines can summarize perceptually distinctive components. 
Sketch strokes are also intuitive tools to interact with users in editing an image in many graphics applications~\cite{yu2019free, zeng2022sketchedit}.
Previous works further explore the connection between 3D structure and conventional sketch lines and show promising results in converting sketches into 3D objects~\cite{Li:2020:Sketch2CAD,li2022free2cad,guillard2021sketch2mesh,zhang2021sketch2model} or extracting sketch lines from detailed 3D geometry~\cite{liu2020neural,benard2019line,liu2021neural}. 
Like image editing applications, user sketches provide an interactive tool to edit the 3D models~\cite{mikaeili2023sked,bandyopadhyay2023doodle}.

Sketch primitives with 3D context can be an effective communication tool for both structural and semantic context, widening the scope of potential applications for VR/AR scenarios.
The aforementioned works assume full 3D models are available for generating sketches or editing 3D objects with line strokes. However, it is not trivial to formulate the exact mapping from artistic sketch curves on-screen space to the underlying 3D structure of the object.
While sketches may contain important spatial arrangements of the object, hand-drawn sketches do not strictly abide by the 3D geometry, either due to a lack of delicate skills or the artist's creative expression.
Therefore, we cannot extract consistent 3D lines from 2D sketch lines extracted individually from different viewpoints. 
Previous works also propose sophisticated techniques for converting 3D models, such as high-resolution triangular mesh, into sketch images, which result in delicate renderings rather than sparse and abstract strokes~\cite{decarlo2003suggestive,judd2007apparent}.
The task does not rely on a data-driven method, as it is hard to obtain a dataset of 3D object-sketch pairs in sufficient quantity and quality.

We suggest that a compact set of 3D geometric primitives can efficiently represent 3D structural information, rendering expressive 2D sketch lines from different viewpoints. 
Specifically, we assume sketch lines are composed of view-independent or view-dependent lines. 
View-independent lines are those from texture or sharp edges, which are distinctive feature lines of the underlying 3D geometry of the given object. 
Our 3D cubic B\'ezier curves are expected to locate the view-independent feature lines. 
View-dependent lines, conversely, come from contours of smooth surface boundaries, whose exact 3D location changes by viewpoints. 
We assign unions of superquadrics to encompass the complex surface contours and provide a differentiable formulation that converts their surfaces into sketch lines for the given viewpoints.

We directly optimize for proposed lightweight 3D primitives given only multi-view images of an object as input. 
We do not need to individually stylize the input images into sketches, and we also do not require any explicit 3D mesh or NeRF volume~\cite{mildenhall2021nerf}. 
Most importantly, our rendering pipeline is fully differentiable for a small set of parameters representing the B\'ezier curves and superquadrics. 
We can then directly optimize the parameters to minimize the popular perceptual loss functions, such as CLIP~\cite{radford2021learning} and LPIPS~\cite{zhang2018unreasonable}, so that the rendered images are perceived as hand-drawn 2D sketches from various viewpoints. 
Further, the coherent 3D structure summarizes important geometric features that satisfy multi-view consistency, which can potentially be extended to various 3D manipulation tasks.

In summary, our key contributions are highlighted as follows:
\begin{itemize}
    \item We propose 3Doodle, a first approach to generate expressive sketches from a set of 3D stroke primitives that summarizes important semantic information from image observations.
    \item Our approach can directly find the 3D strokes from multi-view images without data-driven training, stylizing input images, or reconstructing 3D models such as NeRF or mesh.
    \item We suggest highly compact 3D stroke representation (less than 1.5kB) of view-independent and view-dependent components, which together can quickly draw conceptual sketch lines of various shapes of objects.
    \item We introduce a fully differentiable rendering method, effectively optimizing powerful perceptual losses with a small set of parameters.
\end{itemize}
We demonstrate that the proposed compact 3D representation can easily mimic an expressive hand drawing of diverse pictured objects, including real-world objects. Our 3Doodle faithfully expresses concepts of the target shape and provides view-consistent sketch images.

\section{Related Works}
\paragraph{Sketch Generation from Image}
Generating sketches from photos of objects suffers from a large domain gap between pictures and sketches. 
Images contain dense, detailed evidence of real objects. In contrast, sketches are sparse and abstract, as they are traditionally a quick tool for an artist to summarize the essential components of the given object on an image plane. 
While extracting edges from image pixels~\cite{canny1986computational, su2021pixel} can also create a line drawing of the object, they do not enjoy the compactness and implicit semantic mapping of conventional sketches.
Numerous works overcome the domain gap by exploiting datasets composed of pairs of images and sketches~\cite{song2018learning, kampelmuhler2020synthesizing}. 
However, obtaining high-quality sketches of given images on a large scale is difficult.
Most sketch generation methods only cover specific categories in datasets, such as human portrait~\cite{berger2013style, yi2019apdrawinggan}, or a limited number of object categories ~\cite{yu2017sketchx, eitz2012hdhso, ha2017neural}.
A few works alleviate the necessity of paired datasets and generate sketches of arbitrary objects with style transfer ~\cite{huang2017arbitrary, liu2021deep} or image translation~\cite{zhu2017unpaired, Isola_2017_CVPR}.
Although they can mimic the style of the free-hand sketches to some extent, the generated pixels are still composed of dense information from original images.
The images in the converted style do not necessarily maintain compact and sparse representation, which are among the most powerful features of sketches.

3Doodle proposes maintaining the sparse characteristics by directly optimizing a set of strokes, similar to recent approaches~\cite{Li:2020:DVG, vinker2022clipasso}.
The direct optimization does not require any dataset to train and, therefore, can generate sketches of arbitrary class objects.
However, previous works optimize 2D strokes in the input image view without explicit 3D structure.
Instead, we optimize 3D strokes with a novel differentiable rendering pipeline.
3Doodle, therefore, generates a compact 3D representation, which can be rendered in various viewpoints to deliver abstract semantic structure.

\paragraph{3D Non-Photorealistic Rendering}
Non-photorealistic rendering (NPR) renders 3D representations that exhibit various artistic styles. 
Among possible variations, our sketch generation task particularly relates to line drawing from 3D models.
Line drawings reflect how humans understand the world~\cite{benard2019line}.
Previous works on line-drawing extraction trace feature lines on surface extrema~\cite{ohtake2004ridge} in addition to view-dependent geometric boundaries such as occluding contours~\cite{benard2019line} or suggestive contours~\cite{decarlo2003suggestive}. 
A more recent method~\cite{liu2020neural} combines an entirely geometric approach with image translation and trains a neural network that generates sketches from 3D models.
However, all the works rely on having an accurate mesh model, which may not be accessible in real-world scenarios.
Further, geometric processing results deviate from typical free-hand drawings because they ignore the effects of texture variations, and lines are densely populated compared to rough sketches.

Recent NPR approaches propose relaxing the mesh requirement and using the 3D volume of neural radiance fields (NeRF) trained from multi-view images instead.
They transform pre-trained NeRF into the style of an exemplar image to generate multi-view consistent sketches~\cite{zhang2022arf, Zhang_2023_CVPR}.
While the pipelines use more casual input than a complete mesh, they need heavy computation first to train a neural volume.
Neural Edge Fields (NEF)~\cite{Ye_2023_CVPR} also extracts a compact set of parametric curves as 3Doodle, but it is converted with additional 2D edge input from pre-trained neural fields.
Furthermore, the 3D lines from NEF cannot represent view-dependent contours, which are crucial in expressing the overall shape of the object.

\section{3Doodle}
\label{sec:Sketch3D}

\begin{figure*}
    \centering
    \includegraphics[width=0.87\linewidth]{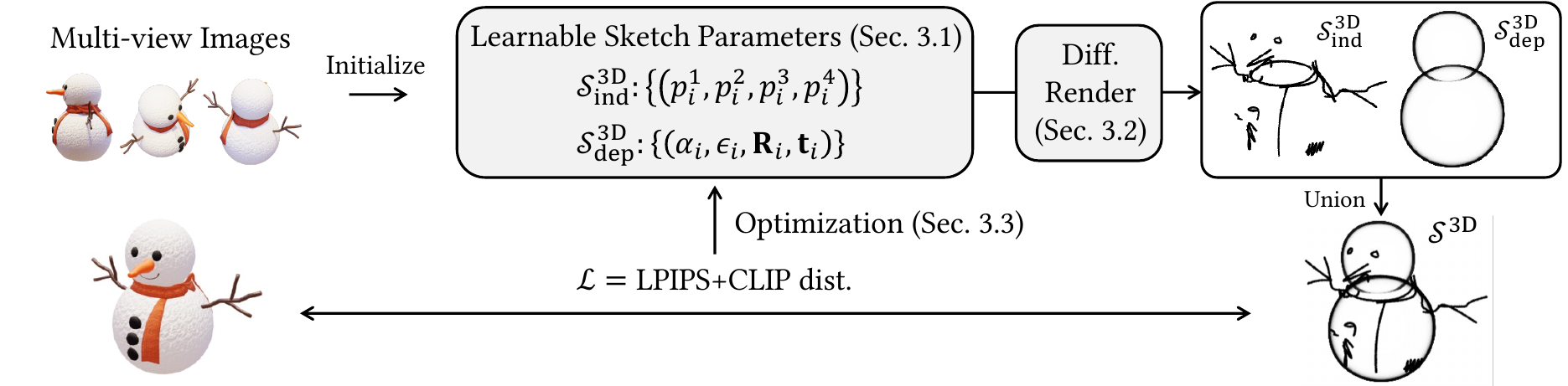}
    \caption{
        Overview of 3Doodle.
        We generate a compact 3D geometric representation from multi-view images of objects.
        We separately define view-independent stroke ($\mathcal{S}^{\text{3D}}_{\text{ind}}$) and view-dependent stroke ($\mathcal{S}^{\text{3D}}_{\text{dep}}$).
        We represent a view-independent stroke as a set of 3D B\'ezier curves and a view-dependent stroke as a contour of superquadrics. (Sec. 3.1)
        We also propose a fully differentiable rendering method to render sketches from the 3D strokes. (Sec. 3.2)
        Finally, our 3D stroke parameters are directly optimized with perceptual losses. (Sec. 3.3)
    }
    \label{fig:method_overview}
\end{figure*}

We first introduce our 3D geometric primitives that generate sketch images in~\cref{subsec:representation}. 
Then our differentiable rendering pipeline in~\cref{subsec:diff_rendering} converts the proposed primitives into an abstract sketch image.
The compact set of parameters is effectively optimized to minimize the perceptual loss function as described in~\cref{subsec:optimization}.

\subsection{Sketch Representation}
\label{subsec:representation}

We propose a coherent set of 3D strokes which can be rendered into 2D sketch lines. 
Parts of sketch lines can be rendered from 3D geometric feature lines, such as ridges and valleys of the surface or texture edges.
Such lines can be represented directly from curves that reside in 3D space.
However, 3D curves with fixed 3D locations cannot convey the geometric structure completely.
For example, the contour lines of smooth surfaces are critical to illustrating the overall shapes in ordinary sketches, but their surface locations change according to the viewpoints.

To embrace different structural elements, we separately model view-dependent and view-independent components:
\begin{equation}    \mathcal{S}^{\text{3D}}=\mathcal{S}_{\text{ind}}^{\text{3D}}\cup\mathcal{S}_{\text{dep}}^{\text{3D}}.
\end{equation}

The view-independent components $\mathcal{S}_{\text{ind}}^{3\text{D}}$ are represented as a set of 3D cubic  B\'ezier curves:
\begin{equation}
    \mathcal{S}_{\text{ind}}^{3\text{D}} = \left\{ B^{\text{3D}}(p_i) \right\}_{i=1}^{N_{\text{ind}}}, \quad p_i = (p_i^0, p_i^1, p_i^2, p_i^3)
\end{equation}
where $N_{\text{ind}}$ is number of strokes.
Each cubic B\'ezier curve $B^{\text{3D}}(p_i)$ is parameterized with the four ordered control points $p_i^j\in \mathbb{R}^3$:
\begin{equation}
    B^{\text{3D}}(t;p_i)=\sum_{j=0}^3 b_{j}(t)p_i^j, \quad b_{j}(t)={\binom{3}{j}}t^j(1-t)^{3-j},
\end{equation}
where $t\in[0, 1]$.
The positions of $4N_{\text{ind}}$ control points are optimized to generate illustrative 3D strokes.

Additionally, view-dependent components $\mathcal{S}_{\text{dep}}^{3\text{D}}$ encapsulate the 3D volume of the given object.
Inspired by the previous works~\cite{pentland1986parts, Alaniz_2023_ICCV, Paschalidou2019CVPR, Paschalidou2020CVPR}, we use the composition of superquadradics~\cite{barr1981superquadrics} as a compact parametric representation that can express various 3D shapes.
More concretely, 
\begin{equation}
    \mathcal{S}_{\text{dep}}^{3\text{D}}(\mathbf{d})=\sigma_{\text{contour}}\left(\bigcup_{i=1}^{N_{\text{dep}}} S(\theta_i), \mathbf{d}\right),
\end{equation}
where $N_{\text{dep}}$ is the number of superquadrics, and $\sigma_{\text{contour}}(G, \mathbf{d})$ is the view-dependent volume density function that generates the contour of any geometric volume $G$ at the given viewing direction $\mathbf{d}$.
We will discuss contour extraction in~\cref{subsec:diff_rendering}.

Each superquadric $S(\theta_i)$ is expressed with its parameter $\theta_i=\{\alpha_{i}, \epsilon_{i}, \mathbf{R}_i, \mathbf{t}_i\}$.
The first two parameters are from the implicit representation $f$ of the superquadric in its canonical coordinate
\begin{equation}
    \label{eq:implicit_superquadric}
    f(\mathbf{x};\alpha_i, \epsilon_i)=\left(\left(\frac{x}{\alpha_{i,1}}\right)^{\frac{2}{\epsilon_{i,2}}} + \left(\frac{y}{\alpha_{i,2}}\right)^{\frac{2}{\epsilon_{i,2}}}\right)^{\frac{\epsilon_{i,2}}{\epsilon_{i,1}}}+\left(\frac{z}{\alpha_{i,3}}\right)^{\frac{2}{\epsilon_{i,1}}},
\end{equation}
where $\epsilon_i=(\epsilon_{i,1},\epsilon_{i,2})$ decides the shape and $\alpha_i=(\alpha_{i,1}, \alpha_{i,2}, \alpha_{i,3})$ decides the scale along axes.
$\mathbf{x}$ lies on the surface of superquadric if $f=1$, outside if $f>1$, and inside if $f<1$.
$\mathbf{R}_i$ and $\mathbf{t}_i$ represent the rigid transformation between the world coordinate and $i$th superquadric's canonical coordinate:
\begin{equation}
    \label{eq:superquadric_transform}
    S(\mathbf{x};\theta_i)=f\left(\mathbf{R}_i^{-1}(\mathbf{x} - \mathbf{t}_i); \alpha_i, \epsilon_i\right).
\end{equation}

We can find the union of multiple superquadrics simply with  the minimum value of all individual implicit surface functions:
\begin{equation}
    \label{eq:superquadric_merge}
    S(\mathbf{x};\theta) = \bigcup_{i=1}^{N_{\text{dep}}} S(\mathbf{x}; \theta_i)=\min\limits_i S(\mathbf{x};\theta_i).
\end{equation}

\subsection{Differentiable Rendering of Sketch Lines}
\label{subsec:diff_rendering}

Given the 3D geometric primitives, we can draw black sketch lines on a white canvas $\mathcal{R}(\mathcal{S}^{\text{3D}})$.
A complete sketch is the union of view-independent and view-dependent components.
Similar to differentiable rendering of parametric objects~\cite{10.1145/3618387}, we propose a fully differentiable rendering pipeline to directly optimize the 3D parameters to produce abstract sketch images of the target objects.

\paragraph{View-Independent Sketch}

View-independent sketch lines involve rendering 3D cubic B\'ezier curves $B^{\text{3D}}$.
When we transform the 3D B\'ezier curve into 2D curves with perspective projection, the 2D curve is no longer a simple cubic spline but a 2D rational B\'ezier curve defined by projected control points.
Instead, we approximate the perspective projection to an orthographic projection, assuming that the camera is sufficiently far from the object and the perspective distortion is minimal.
We further discuss the effect of approximation in Appendix A.
Using orthographic projection, the following theorem holds:
\begin{restatable}[]{thm}{orthographic}
    \label{theo:orthographic}
    Orthographic projection of 3D B\'ezier curve $B^{\text{3D}}$ 
 on the image plane $\tilde{B}^{\text{2D}}$ is identical to the 2D B\'ezier curve $B^{\text{2D}}$ which is a cubic B\'ezier curve defined by $(q^0, q^1, q^2, q^3)$, where $q^j$ is an orthographic projection of 3D control point $p^j$ of $B^{\text{3D}}$.
\end{restatable}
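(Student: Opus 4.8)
The plan is to reduce the statement to the affine invariance of Bézier curves, which in turn rests on the partition-of-unity property of the Bernstein basis. First I would observe that orthographic projection onto the image plane is an affine map $\Pi(\mathbf{x}) = A\mathbf{x} + \mathbf{c}$, with linear part $A \in \mathbb{R}^{2\times 3}$ (a rotation aligning the world frame to the camera frame composed with discarding the depth axis) and an offset $\mathbf{c} \in \mathbb{R}^2$. By definition the projected curve is $\tilde{B}^{\text{2D}}(t) = \Pi\big(B^{\text{3D}}(t;p)\big)$, while the projected control points are $q^j = \Pi(p^j)$, so the goal is to show $\Pi\big(B^{\text{3D}}(t;p)\big) = \sum_{j=0}^{3} b_j(t)\,q^j$ for every $t \in [0,1]$.

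Next I would apply $\Pi$ to the curve and pass the linear part through the summation,
\begin{equation}
    \Pi\big(B^{\text{3D}}(t;p)\big) = A\sum_{j=0}^{3} b_j(t)\,p^j + \mathbf{c} = \sum_{j=0}^{3} b_j(t)\,A p^j + \mathbf{c},
\end{equation}
and then absorb the translation into the sum using the identity $\sum_{j=0}^{3} b_j(t) = \big(t+(1-t)\big)^3 = 1$. Writing $\mathbf{c} = \sum_j b_j(t)\,\mathbf{c}$ and regrouping yields
\begin{equation}
    \Pi\big(B^{\text{3D}}(t;p)\big) = \sum_{j=0}^{3} b_j(t)\,\big(A p^j + \mathbf{c}\big) = \sum_{j=0}^{3} b_j(t)\,q^j = B^{\text{2D}}(t),
\end{equation}
which is precisely the cubic Bézier curve defined by $(q^0, q^1, q^2, q^3)$, completing the argument.

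The step I expect to be the crux is the treatment of the translation $\mathbf{c}$: a purely linear map would commute with the Bézier sum immediately, but a projection whose image plane need not pass through the origin is only affine. The partition-of-unity property is exactly what distributes the constant offset across the Bernstein weights, and it is the single place where the specific (degree-three Bernstein) structure of the curve is used. I would also take care to verify that the orthographic model genuinely yields an affine map with no per-point depth scaling, since the whole reduction fails for a true perspective projection, where the depth-dependent division turns the image of a polynomial Bézier curve into a rational one — consistent with the remark preceding the theorem.
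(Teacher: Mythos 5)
Your proof is correct, and it takes a genuinely different (though closely related) route from the paper's. The paper argues by a choice of coordinates: without loss of generality the camera looks along the $z$-axis and the image plane is the $xy$-plane at $z=0$, so the orthographic projection is literally deletion of the $z$-coordinate --- a purely \emph{linear} map --- and commuting it with the Bernstein sum is immediate; no partition-of-unity step ever appears. You instead keep the projection in general position, model it as an affine map $\Pi(\mathbf{x}) = A\mathbf{x}+\mathbf{c}$, and invoke $\sum_{j=0}^{3} b_j(t) = 1$ to absorb the translation $\mathbf{c}$ into the weighted sum. Your version buys three things: it exhibits the theorem as an instance of the general affine invariance of B\'ezier curves; it needs no coordinate normalization; and it pinpoints exactly why the argument breaks for perspective projection (the per-point depth division destroys affinity), which is consistent with the paper's Theorem~\ref{theo:perspective} showing that the perspective image is a \emph{rational} B\'ezier curve. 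It is also worth noting that the paper's WLOG quietly relies on the very fact you make explicit: rewriting the curve in camera coordinates applies a rigid --- hence affine --- transform, and identifying the transformed curve as the B\'ezier curve of the transformed control points is again the partition-of-unity argument. The paper's proof buys brevity; yours is the more self-contained and more general one.
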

\noindent We provide the proof of Theorem~\ref{theo:orthographic} in Appendix A.
Following Theorem~\ref{theo:orthographic}, we can render the 3D B\'ezier curve $B^{\text{3D}}$ at a given viewpoint by 1) perspective-projecting four control points and then 2) rendering a 2D B\'ezier curve with existing differentiable rasterizer~\cite{Li:2020:DVG}.
The rendering pipeline is fully differentiable since the projection operation and rasterizer are both differentiable.

\begin{figure}
    \centering
    \includegraphics[width=0.9\linewidth]{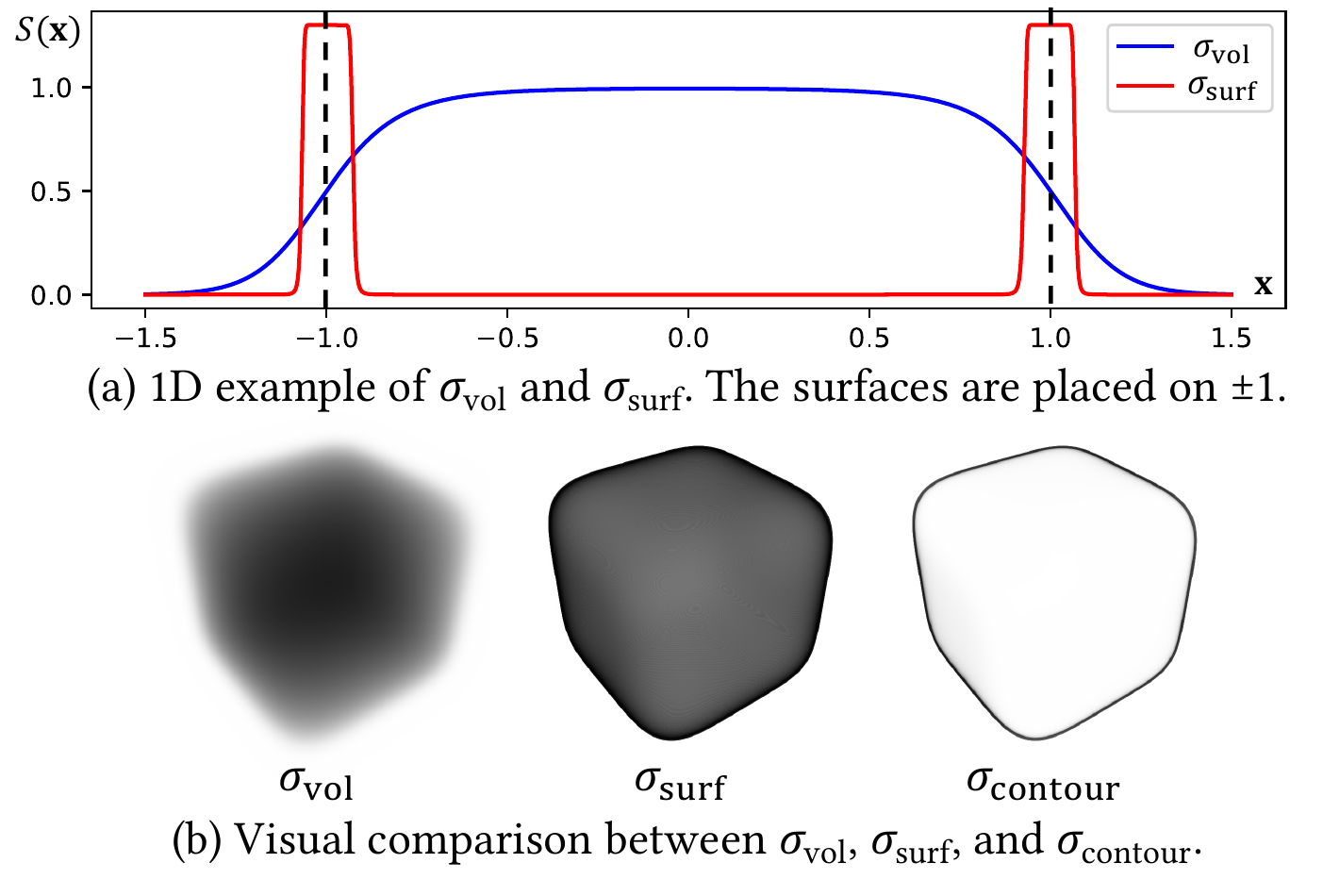}
    \caption{(a) We visualize the volume density $\sigma_{\text{vol}}$ and surface volume density $\sigma_{\text{surf}}$. (b) We display the volume-rendered results of each volume density component. One can obtain the contour of geometric volume by volume rendering our proposed view-dependent contour volume density $\sigma_{\text{contour}}$.}
    \label{fig:sigma_comparison}
\end{figure}

\paragraph{View-Dependent Sketch}
We provide the formulation to generate contour sketches that are differentiable to the superquadric parameters.
Volume rendering exploits smooth transition in volume density and provides a powerful tool to propagate image-space gradients to 3D geometry, as demonstrated by neural radiance fields~\cite{mildenhall2021nerf}.
Similarly, we propose a differentiable pipeline by defining smooth density fields representing sketch lines.
To allow continuous variation, ISCO~\cite{Alaniz_2023_ICCV} proposes volume density of superquadric
\begin{equation}
    \sigma_{\text{vol}}(\mathbf{x})=\text{sigmoid}(\gamma(1-S(\mathbf{x})),
\end{equation}
where $S$ is an implicit surface function defined in~\cref{eq:implicit_superquadric,eq:superquadric_transform,eq:superquadric_merge}.
$\gamma$ controls the slope of the shape boundary.
For sketch generation, we revise the volume density function to be concentrated near the surface $S=1$:
\begin{equation}
    \sigma_{\text{surf}}(\mathbf{x})=a\cdot \text{sigmoid}\left(\frac{1}{\gamma(1-S(\mathbf{x}))^2+\epsilon}-\gamma(1-S(\mathbf{x}))^2-b\right),
\end{equation}
where $a, b\in \mathbb{R}^+$ are hyperparameters determining the intensity and thickness of the surface volume.
We illustrate the $\sigma_{\text{vol}}$ and $\sigma_{\text{surf}}$ in one-dimensional example in~\cref{fig:sigma_comparison} (a).

The contours of sketch lines are known to be points whose normals are perpendicular to the viewing rays. 
We define view-dependent volume density that represents the contours of 3D superquadradics by attenuating the surface volume density function $\sigma_{\text{surf}}$ by the angle between $\mathbf{n}$ and $\mathbf{d}$:
\begin{equation}
    \sigma_{\text{contour}}(\mathbf{x}, \mathbf{d})=(1-(\mathbf{n}(\mathbf{x})\cdot \mathbf{d})^\beta)\sigma_{\text{surf}},
\end{equation}
where $\mathbf{d}$ is a viewing direction and $\mathbf{n}$ is a normal direction at the querying point $\mathbf{x}$.
Note that the normal direction $\mathbf{n}(\mathbf{x})=\left[\frac{\partial S}{\partial x}, \frac{\partial S}{\partial y}, \frac{\partial S}{\partial z}\right]$ can be calculated in a closed form with analytic derivative.
Then we render the view-dependent sketch $\mathcal{R}(\mathcal{S}_{\text{dep}}^{\text{{3D}}})$ using volume rendering technique and numerical quadrature approximation~\cite{max1995optical} with our contour volume density $\sigma_{\text{contour}}$.

\subsection{Optimization}
\label{subsec:optimization}

Given the differentiable pipeline, we optimize the parameters of our sparse 3D primitives to generate sketch images.
While there exists a significant domain gap between photo-realistic input images and the abstract sketch~\cite{vinker2022clipasso}, we design a loss function to encourage a balance between structural components and semantic perception compared to input multi-view images.
Specifically, we use LPIPS loss~\cite{zhang2018unreasonable} to capture rough geometric layout and CLIP score~\cite{radford2021learning} to maintain the high-level semantic meaning:
\begin{equation}
\begin{split}
    \mathcal{L}=\sum_{I\in \mathcal{I}}\lambda ~ \rho\left(\text{LPIPS}(I, \mathcal{R}(\mathcal{S}^{\text{3D}})), \alpha, c\right) \\+\, \text{dist}(\text{CLIP}(I), \text{CLIP}(\mathcal{R}(\mathcal{S}^{\text{3D}}))),
    \label{eq:loss}
\end{split}
\end{equation}
where $I$ is the image sampled from the input multi-view images $\mathcal{I}$, and $\text{dist}(a,b)=1-\frac{a\cdot b}{\|a\|\cdot\|b\|}$ is a cosine distance.
$\rho(x,\alpha,c)$ is a robust loss function~\cite{barron2019general}, which stabilize the optimization despite  outliers.
Outliers are inevitable as it is not trivial to correctly account for view-dependent occlusion only with line primitives.
We use $\alpha=1, c=0.1$ for all experiments.
More studies regarding the effects of loss terms can be found in the~\cref{sec:experiments}.
Our sketch parameters are directly optimized with Adam optimizer~\cite{kingma2014adam} by minimizing the loss term in~\cref{eq:loss}.

Due to the highly non-convex nature of our minimizing objective, appropriate initialization leads to fast and robust optimization.
We leverage the SfM (structure-from-motion) point clouds obtained while estimating the camera poses from input multi-view images.
We sample SfM points using farthest point sampling and initialize them to be locations of 3D B\'ezier curve control points and the center of superquadrics.
We further discuss the effects of the initialization method in~\cref{sec:experiments}.

\section{Experiments}
\label{sec:experiments}
\paragraph{Dataset \& Implementation Details}
The inputs to 3Doodle are multi-view images and the corresponding camera poses.
Our inputs are identical to recent neural scene reconstruction methods, and we evaluate the performance on the synthetic dataset from NeRF~\cite{mildenhall2021nerf} and InvRender~\cite{zhang2022invrender}.
We additionally collect diverse shapes of synthetic objects from open-source 3D models.
We render input images with Blender's Cycles path tracer, whose views are posed randomly on the upper hemisphere, centered at the object.
We include samples from our dataset in Appendix B.
We also test our method on the real-world scenes from the CO3D dataset~\cite{reizenstein2021common}.

Our code is mainly implemented with the auto differentiation library in PyTorch~\cite{paszke2019pytorch}.
We optimize the sketch parameters using Adam optimizer~\cite{kingma2014adam} with the default parameters and a learning rate of \num{1e-3}.
We render the 2D B\'ezier curves with a differentiable vector graphics library~\cite{Li:2020:DVG}, and superquadric contours with volume rendering from the vanilla NeRF model~\cite{mildenhall2021nerf}.
Our perceptual losses in Equation~(\ref{eq:loss}) employ pretrained RN101 model of CLIP encoder~\cite{radford2021learning} for the CLIP loss, and VGG16 model of LPIPS~\cite{zhang2018unreasonable} for the LPIPS loss.
Users additionally provide the number of B\'ezier curves and superquadrics to decide their desired abstraction level.
More details regarding our implementation can be found in Appendix.

\paragraph{Baselines}
To the best of our knowledge, we are the first to generate 3D sketches from multi-view images.
We compare the various characteristics of generated sketches against five baselines that employ different input/output representations.

Two of the baselines create 2D sketches from a single image.
Kampelm\"uhler and Pinz~\shortcite{kampelmuhler2020synthesizing} train a class-specific generative network from sketch-image pairs, and works only on specific object categories.
We therefore compare them only in balloon, car, chair, and teddy bear scenes.
CLIPasso~\cite{vinker2022clipasso} spawns a composition of sparse 2D strokes from image input.
We provide a single rendering of our 3D dataset for these baselines.

Three other baselines use 3D representations to draw sketch images with additional inputs.
Among them, we categorize Artistic Radiance Fields (ARF)~\cite{zhang2022arf} and suggestive contours~\cite{decarlo2003suggestive} to ones using detailed geometry to create dense lines.
ARF first trains NeRF and converts it into the style of a reference image, to which we feed the result of our sketch.
Suggestive contours extract geometric features from 3D mesh.
Given the input images only, we reconstruct the mesh from state-of-the-art neural SDF reconstruction~\cite{wu2022voxurf}.
Lastly, Neural Edge Fields (NEF)~\cite{Ye_2023_CVPR} output sparse 3D primitives similar to our approach. 
It produces geometric feature curves in 3D given multi-view 2D edge maps, estimated from PiDiNet~\cite{su2021pixel}.
However, NEF fails to reconstruct meaningful edge fields in our toycar and toyhorse scenes so we did not include the results.

\subsection{Quantitative Evaluation}
\begin{table}
    \centering
    \caption{Sketch recognition accuracy. We mark the best and second-best results in bold and underlined numbers. }
    \label{tab:sketch_recognition}
    \resizebox{\linewidth}{!}{
    \begin{tabular}{l|cccccc@{\:}}
    \toprule
    &Sketch& LPIPS$(\downarrow)$ & DINO$(\uparrow)$ & $\text{CLIP}^{\text{img}}$$(\uparrow)$& $\text{CLIP}^{\text{txt}}$$(\uparrow)$& Size \\
    \midrule
    ARF & \multirow{2}{*}{Dense}&\textbf{0.179} & \textbf{0.830} & 0.887 & 0.651 & $<3.8$GB\\
    Sugg. Contours& & 0.255 & \underline{0.828} & \textbf{0.897} & \underline{0.659} & $<70$MB\\
    \midrule
    CLIPasso &\multirow{3}{*}{Sparse}& 0.238 & 0.738 & 0.826 & 0.618 & $<1$kB\\
    NEF& & 0.324 & 0.721 & 0.852 & 0.592 & $\sim 6$MB\\
    3Doodle (ours) && \underline{0.217} & 0.784 & \underline{0.895} & \textbf{0.665} & $<1.5$kB\\
    \bottomrule
    \end{tabular}
    }
\end{table}

\begin{figure}
    \centering
    \includegraphics[width=\linewidth]{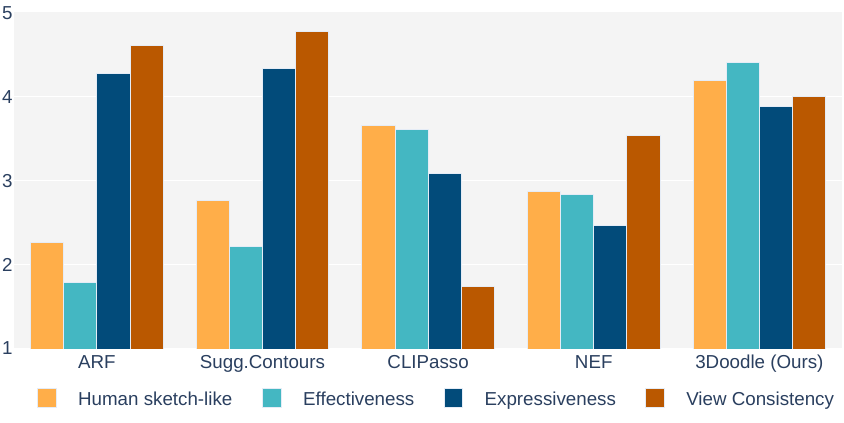}
    \caption{Bar plots of the perceptual study results.}
    \label{fig:user_study}
\end{figure}

As we value a hand-drawn sketch to be a subjective and perceptive mean of communication, it is not trivial to define a universal metric to indicate the quality of the output.
Instead, we devise several indirect measures that implicate the desirable characteristics of sketches. 
Specifically, we argue that 3Doodle can create human-like sketches that capture overall semantics in images but still maintain coherent yet sparse 3D structures.

\Cref{tab:sketch_recognition} utilizes deep features trained with large image database to make an assessment.
LPIPS~\cite{zhang2018unreasonable}, DINO~\cite{oquab2023dinov2}, and CLIP$^\text{img}$ compare the similarity between the generated sketch and the Blender rendering from novel viewpoints, demonstrating the effectiveness in delivering the structural (LPIPS, DINO) or abstract semantics (CLIP$^\text{img}$).
CLIP$^\text{txt}$ compares the similarity between the CLIP embedding vectors of sketches and text prompts defined as ``A sketch of a(n) \{\textit{object name}\}''.
We expect that the value implies how likely the generated images are perceived as conventional sketches.
Note that we use AlexNet~\cite{krizhevsky2012imagenet} and Vit-B/32~\cite{dosovitskiy2020image} models for evaluating LPIPS and CLIP scores, respectively, which are not used in our optimization process.
We normalize the cosine similarity values of embedding vectors from $[-1,1]$ to be $[0,1]$.
We additionally indicate the sparsity with the actual size of the generated sketch representation.

Our measure suggests that 3Doodle outperforms across all metrics against sketch generation methods with sparse representation (NEF and CLIPasso).
Dense representations (ARF and suggestive contours) preserves the structural similarity, as expected, and our approach shows slightly lower performance.
It is a reasonable result since LPIPS and DINO are trained with conventional images exhibiting dense pixel information.
However, for $\text{CLIP}^{\text{img}}$ similarity, which may be biased toward high-level semantics, 3Doodle achieves a score comparable to suggestive contours.
Especially, our approach marks the highest score against all baselines in $\text{CLIP}^{\text{txt}}$.
The results show that 3Doodle best conveys the semantic meaning of objects, and it is best recognized as a ``sketch''.
We also provide the memory requirement of the final sketch representations.
The dense representations require significantly large storage.
NEF also requires several orders of magnitude large memory size, as the edge information is stored in field representation.
CLIPasso and 3Doodle both maintain sparse primitives, achieving a significantly smaller memory footprint.
Our 3D primitives, however, achieve better performance on all evaluation metrics compared to 2D strokes in CLIPasso.

We further design a questionnaire to directly evaluate the perceptual implication of generated sketches.
We ask four key characteristics of the sketch that we are targeting, namely sketch-likeness, effectiveness, expressiveness, and view consistency.
The participants are asked to rate the sketches on a scale of 1 to 5 on the following questions: (i) How much do the images look like free-hand sketches drawn by humans?; (ii) how efficiently do the images represent the target object?; (iii) how well do the sketches represent the essential structure of the target object?; and (iv) how well do the sketches of moving views maintain structural consistency?
More details can be found in the Appendix.

The answers from 72 participants are summarized in~\cref{fig:user_study}.
3Doodle receives significantly higher score for being realistic sketches compared to the baselines using dense representations (ARF and suggestive contours).
The results also indicate that the participants appreciate the effectiveness of our sparse representation. 
Dense representations, on the other hand, can preserve the fine 3D geometry and maintain highly expressive and consistent structure, as expected.
3Doodle achieves a comparable score in expressiveness despite its compactness, which is a great improvement from other sparse representations (NEF and CLIPasso).
CLIPasso marks the lowest score in multi-view consistency, while 3Doodle maintains inherent view consistency that comes from the 3D representation.

\begin{figure}
    \centering
    \includegraphics[width=\linewidth]{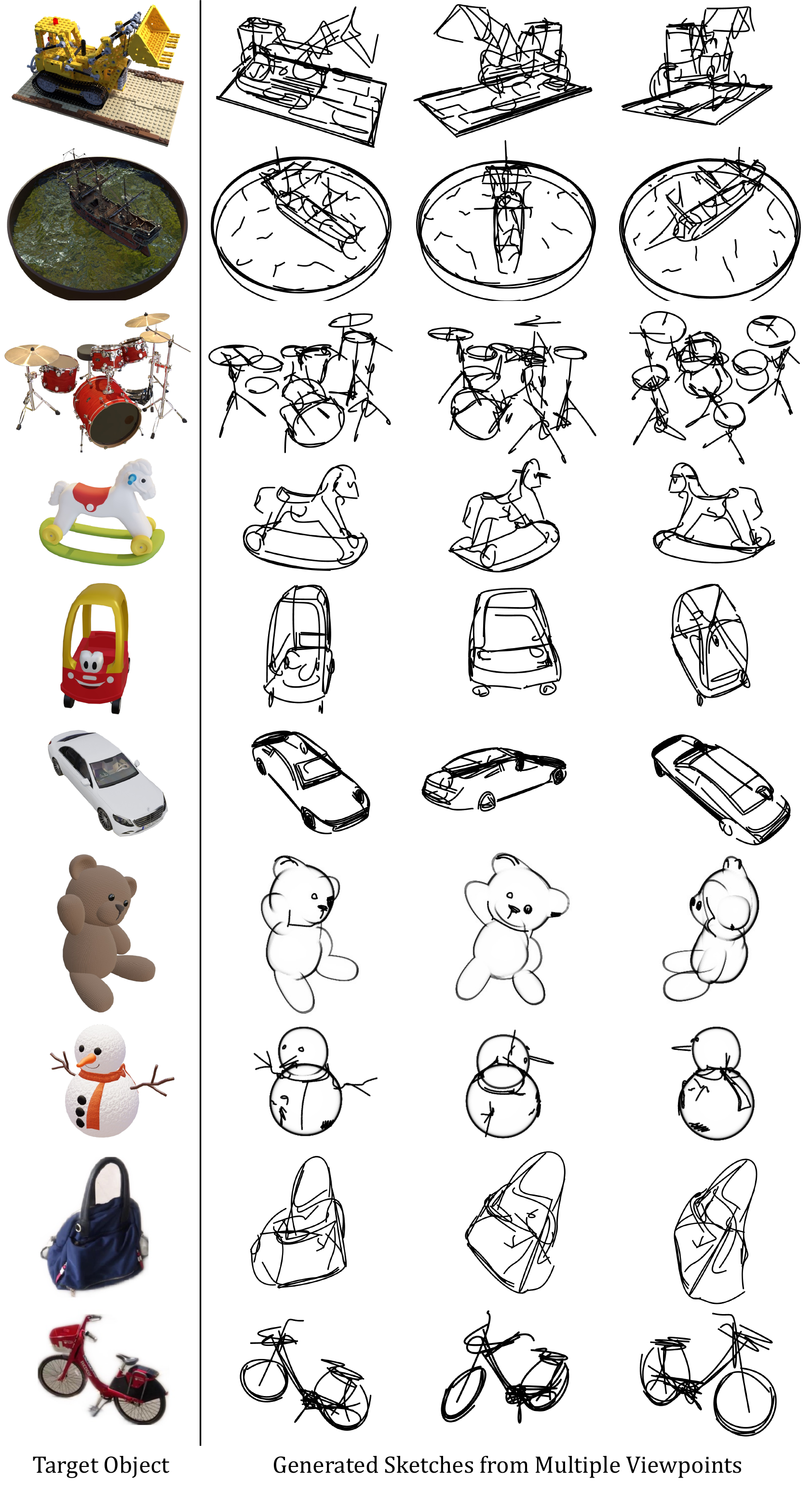}
    \caption{Qualitative results. We show the target objects in the leftmost column and multi-view rendered results of our 3D strokes in the right three columns.}
    \label{fig:qualitative}
\end{figure}

\subsection{Qualitative Evaluation}
We demonstrate the qualitative results of our sketches in~\cref{fig:teaser,fig:qualitative}.
3Doodle successfully expresses the essential structures of various types of objects with a highly compact representation.
Our view-independent sketches $\mathcal{S}^{\text{3D}}_\text{ind}$ capture both geometric features (e.g. fine structures of sails in the boat scene) and semantic features (e.g. a wave pattern in the boat scene).
Also, our view-dependent components $\mathcal{S}^{\text{3D}}_\text{dep}$ successfully represent the smooth bounding surfaces that envelop the objects.
While $\mathcal{S}^{\text{3D}}_\text{dep}$ expresses the outline of objects (e.g. shapes of teddy bear and snowman), $\mathcal{S}^{\text{3D}}_\text{ind}$ expresses the fine details from texture and sharp geometry (e.g. arms of snowman and face of teddy bear) to draw whole sketches together.

We provide a visual comparison between baselines with 3D representation in~\cref{fig:comparison_3d_methods}.
The sketches generated by suggestive contours are close to shading rather than human free-hand line drawing since the surfaces of reconstructed 3D meshes are not perfect as human assets.
Also, ARF, which converts the color of pre-trained plenoxels~\cite{fridovich2022plenoxels}, draws random strokes on the surface of objects (e.g. the random strokes on the plate of the hotdog scene and the surface of the teddy bear).
In contrast, NEF and 3Doodle produce more human-like drawings since both methods represent sketches as a set of geometric primitives.
However, NEF fails to represent the objects which contain smooth surfaces (teddy bear and snowman) since their edge fields only extract the view-independent parametric curves.
In contrast to NEF, 3Doodle successfully represents the sketch of the smooth surfaces with $\mathcal{S}^\text{3D}_\text{dep}$.

We further visually compare our approach with image sketch generation algorithms in~\cref{fig:comparison_2d_methods}.
The approach by Kampelm\"uhler and Pinz are not designed to generate sketches for unseen categories (e.g. lego scene) since they need training from sketch-image paired datasets.
Also, the sketches generated for the seen categories are not faithful to the target objects.
The optimization-base methods CLIPasso and 3Doodle successfully produce sketches that abstract a wide variety of objects with a sparse set of strokes.
However, CLIPasso fails to maintain consistency across the different viewpoints. (e.g. the vertical lines of the air balloon, lines on the chair seat, and the base of the Lego)
On the contrary, 3Doodle renders view-consistent 2D sketches due to the 3D representation.

\begin{figure}
    \centering
    \includegraphics[width=\linewidth]{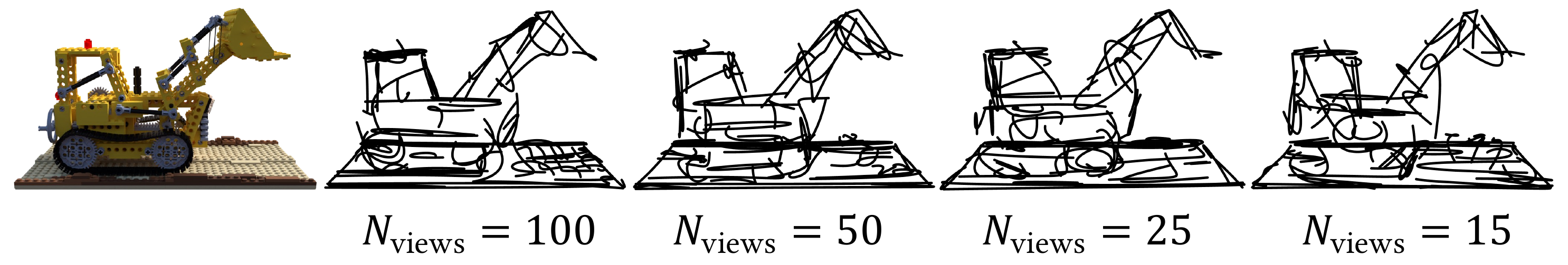}
    \caption{3Doodle robustly generates sketch for a few multi-view inputs.}
    \label{fig:few_view}
\end{figure}
\begin{figure}
    \centering
    \includegraphics[width=0.9\linewidth]{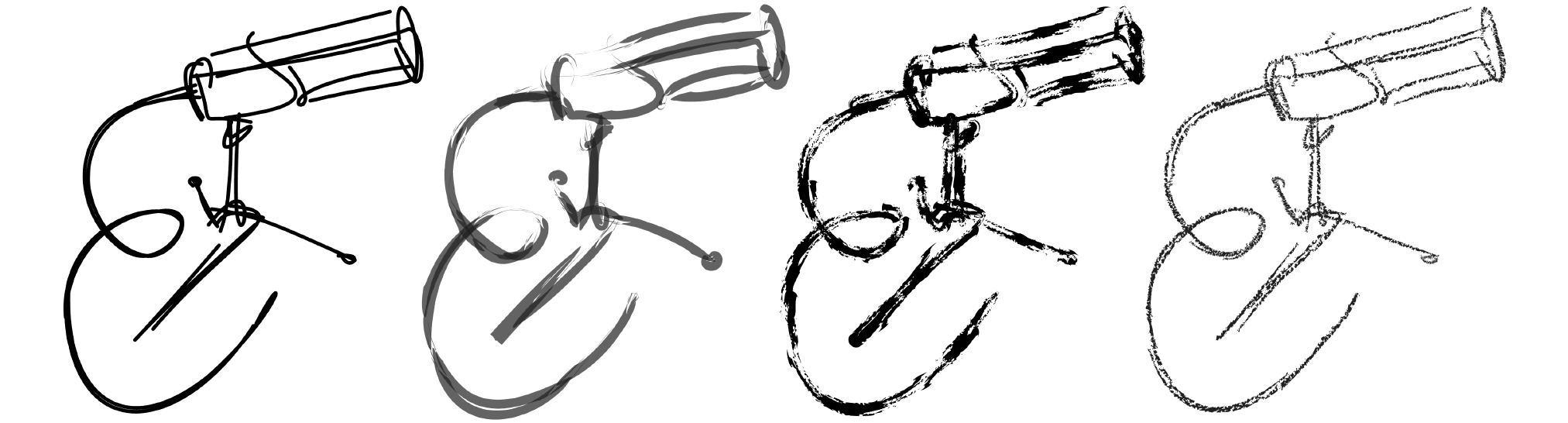}
    \caption{Sketch stylization. We demonstrate various styles of our generated sketch by rendering $\mathcal{S}^{\text{3D}}$ as vector graphics format and applying various brush styles from Adobe Illustrator.}
    \label{fig:svg_stylize}
\end{figure}
Furthermore, our 3D sketch have a small number of parameters, providing an additional advantage of robustly generating sketches even from a small number of input images.
In~\cref{fig:few_view}, we demonstrate the results of our sketch generated from a smaller number of input views.
We also uniformly sample the viewpoints on the upper hemisphere.
We observe that we can obtain a reasonable sketch from only 15 views which is much smaller than the default 100 views of the NeRF training dataset.
Our approach can also generate sketches from real-world captures in CO3D~\cite{reizenstein2021common} without further adjustments as shown in the last two columns in~\cref{fig:qualitative}.

Also, one can alter the style of our sketch by applying different brushes to the vector strokes as illustrated in~\cref{fig:svg_stylize}.

\subsection{Ablation Studies}
\begin{figure}
    \centering
    \includegraphics[width=0.8\linewidth]{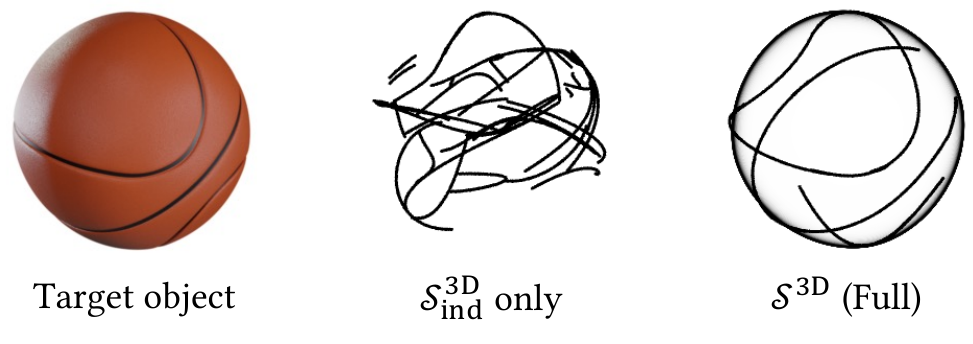}
    \caption{Effects of sketch components. The view-independent sketches cannot solely represent the objects that have smooth surfaces.}
    \label{fig:sketch_component}
\end{figure}

In this section, we analyze the effects of important components of 3Doodle.
First, we study about the necessity of the view-dependent component $\mathcal{S}^{\text{3D}}_{\text{dep}}$, which is our novel addition compared to previous works employing line-only representations.
The volumetric encapsulation is critical especially for sphere objects, as in~\cref{fig:sketch_component}.
When we only use 3D line strokes ($\mathcal{S}^{\text{3D}}_{\text{ind}}$) the 3D primitives clearly fail to express the contours of smooth surfaces.
NEF also uses view-independent 3D strokes, and exhibits similar characteristics when processing round objects with view-dependent contours, such as teddy bear and snowman in~\cref{fig:comparison_3d_methods}.

\begin{figure}
    \centering
    \includegraphics[width=\linewidth]{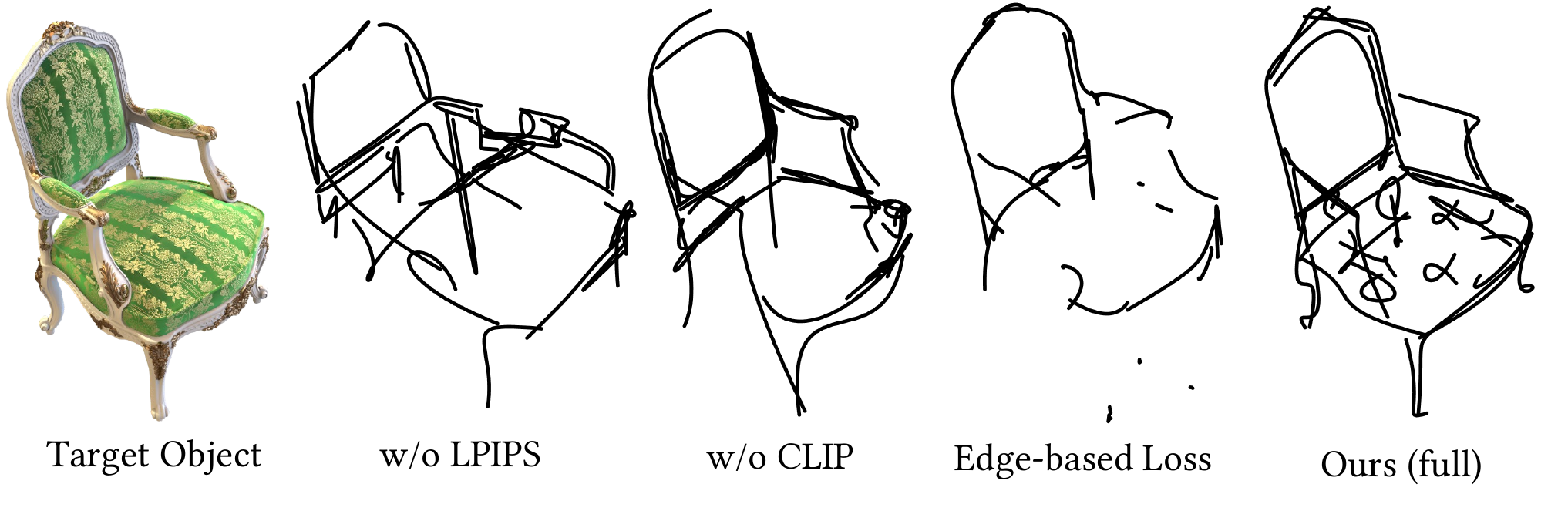}
    \caption{Effects of loss function.}
    \label{fig:loss_ablation}
\end{figure}

\begin{table}
    \centering
    \caption{Quantitative results on ablated loss terms}
    \label{tab:ablation}
    \resizebox{0.85\linewidth}{!}{
    \begin{tabular}{@{\:}l|cccc@{\:}}
    \toprule
    \textsc{Chair} scene& LPIPS$(\downarrow)$ & DINO$(\uparrow)$ & $\text{CLIP}^{\text{img}}$$(\uparrow)$& $\text{CLIP}^{\text{txt}}$$(\uparrow)$ \\
    \midrule
    w/o LPIPS & 0.351 & 0.763 & 0.879 & 0.649 \\
    w/o CLIP & 0.227 & 0.755 & 0.885 & 0.675 \\
    Edge-based loss & 0.276 & 0.740 & 0.867 & 0.646 \\
    3Doodle (full) & 0.216 & 0.772 & 0.897 & 0.660 \\
    \bottomrule
    \end{tabular}
    }
\end{table}

Then we evaluate the individual terms in our loss term.
As shown in~\cref{fig:loss_ablation}, both LPIPS and CLIP losses are crucial to effectively capture perceptual characteristics with sparse primitives.
Without LPIPS, the resulting sketch deviates from the given geometric structure.
On the other hand, removing CLIP loss retains the overall layout of the input object, but misses essential parts to be a `chair'.
We measure our quantitative metrics of the ablated versions with the chair object in~\cref{tab:ablation}.
Interestingly, removing CLIP loss does not necessarily achieve better LPIPS loss, and vice versa.
Our full loss function can find the better optimal point.
We also test a devised loss function which optimizes line strokes to match edge maps extracted from images~\cite{su2021pixel}.
All evaluation metrics are inferior, and the resulting sketch shows remaining short and un-optimized strokes.
We can infer that the loss directly employing detailed edge-based images suffers from a complex convergence basin with local minima.
Our full loss term efficiently finds an abstract sketch of an object.

\begin{figure}
    \centering
    \includegraphics[width=\linewidth]{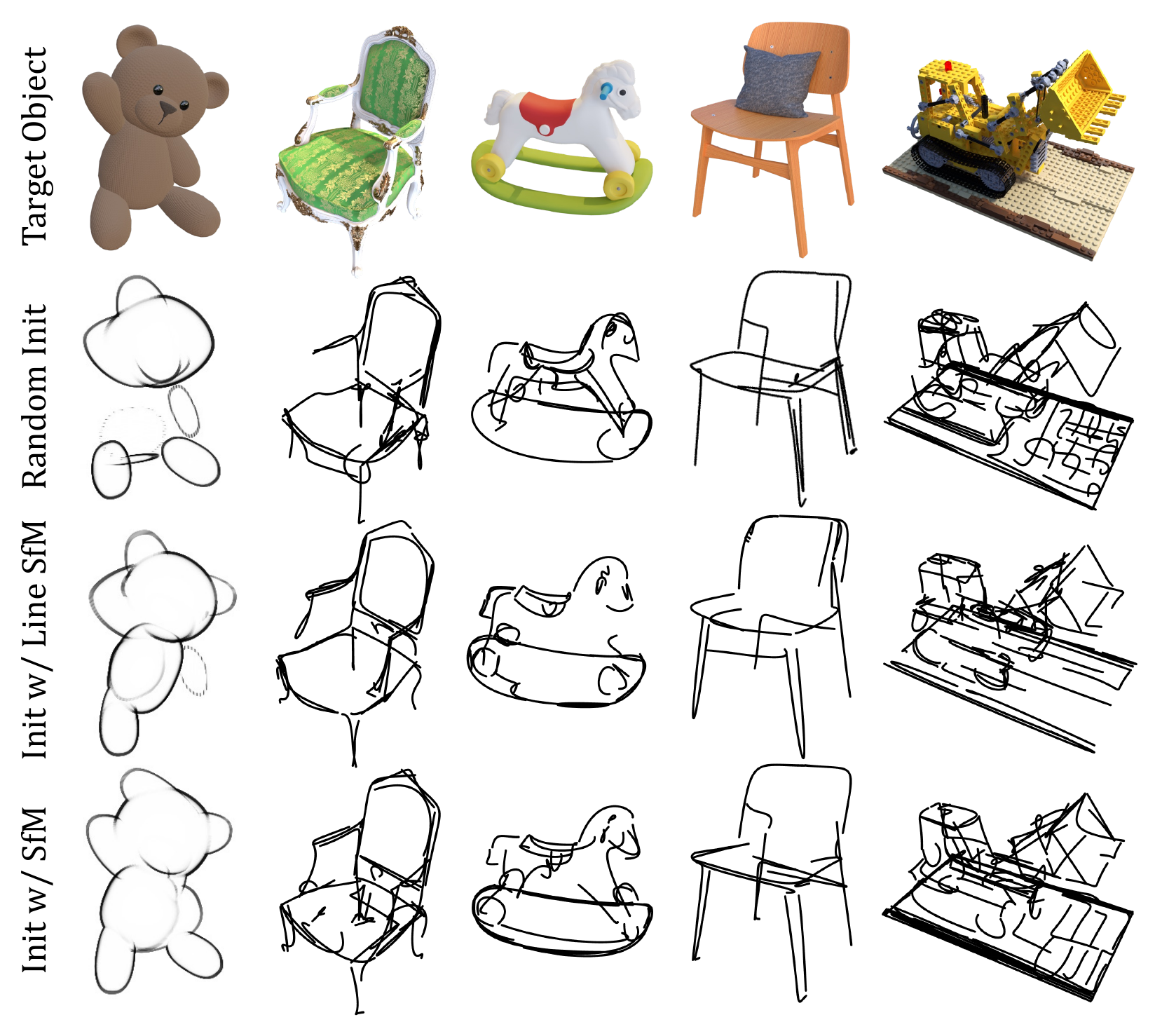}
    \caption{Effects of initialization step.}
    \label{fig:initialization}
\end{figure}

Next, we demonstrate the effectiveness of initializing our 3D primitives from SfM points in~\cref{fig:initialization}.
As a comparison, we show results where the control points and the superquadric centers are initialized randomly or using the output of line-based SfM~\cite{liu2023limap}.
Our approach is stable and produces a similar quality of results in most scenes.
However, our non-convex optimization relies on image observations and may condense 3D strokes in high-contrast regions or miss details without much image evidence, such as a white arm of a chair against a white background.
In such challenging scenarios, SfM point clouds provide a proper initialization to fully cover the shape of the target objects with a minimal number of primitives, such as superquadrics in the teddy bear.
Initialization from line-based SfM can partially mitigate the problem, especially for structures that can be approximated by straight lines, and sometimes results in faster convergence if we only use B\'ezier curves $\mathcal{S}^{\text{3D}}_{\text{ind}}$.
However, it struggles to reconstruct curvy boundaries in a teddy bear or a rocking pony. 
The details on the initialization pipeline can be found in Appendix.

\begin{figure}
    \centering
    \includegraphics[width=\linewidth]{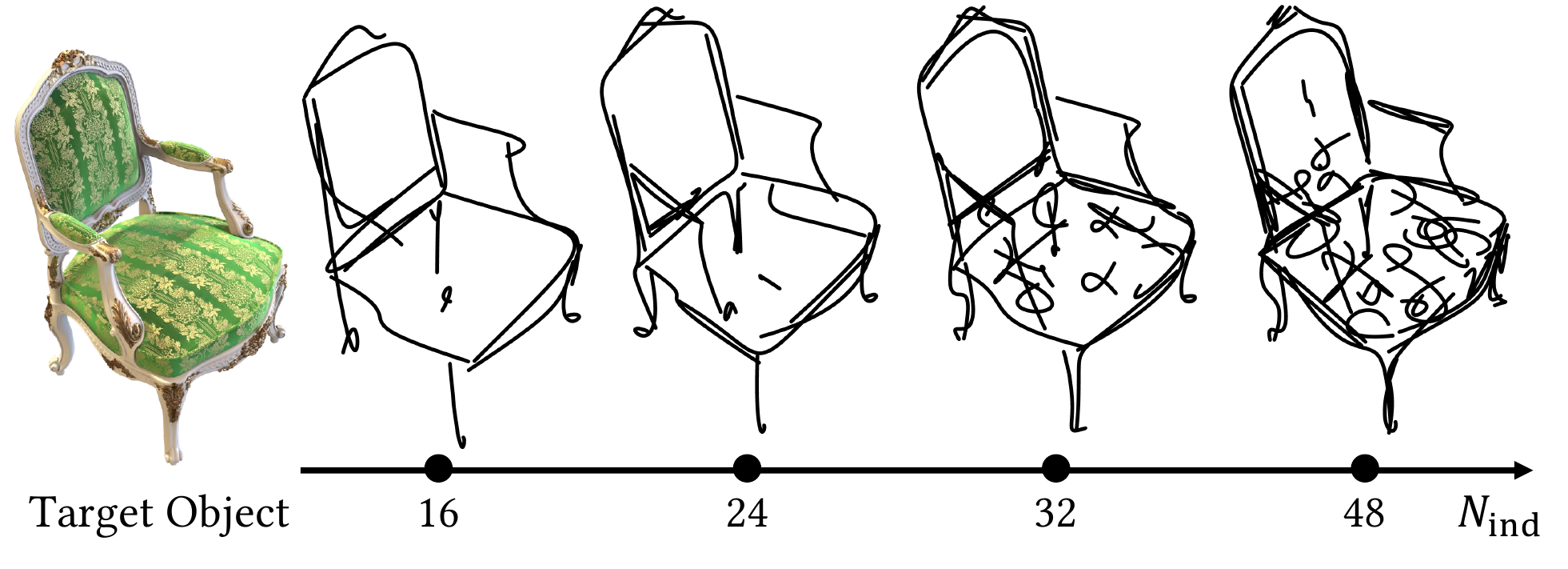} 
    \caption{The effects of using different numbers of sketch components $N_\text{ind}$. With more strokes, 3Doodle captures finer details. Fewer strokes lead to more abstract expression.}
    \label{fig:ablation_num_lines}
\end{figure}

We also study the effects of the number of strokes in 3Doodle.
In~\cref{fig:ablation_num_lines}, we display the different abstraction levels of sketches resulted by varying the number of B\'ezier curves.
Overall, 3Doodle can generate high-quality sketches with strokes ranging from 16 to 48 strokes.
As the number of strokes increases, 3Doodle effectively captures finer details, whereas a higher level of abstraction is achieved in expressing the object with fewer strokes.
However, when the number of B\'ezier curves is too small compared to the complexity of the input object, 3Doodle may fail to represent the geometric details of the target object (e.g., the right-side armrest of the chair is erased using only 16 strokes in~\cref{fig:ablation_num_lines}).

\section{Conclusion and Future Works}
\label{sec:conclusion}
We propose 3Doodle, the sketch generation pipeline from 3D primitives that are optimized from multi-view images. 
The suggested view-dependent and view-independent components compose a highly effective and compact set of 3D structural descriptions.
Equipped with our novel differentiable formulation, we optimize a small number of parameters with perceptual losses in the image domain instead of the precise geometric formulation.
Our results can be rendered in arbitrary viewpoints and successfully deliver essential semantics of various objects while mimicking the abstract styles of human sketches.

3Doodle has many interesting potential applications.
As an automatic way to generate multiple sketch images of the 3D shape, 3Doodle can build a dataset of 3D sketch-images or 3D sketch-shape pairs, which is more scalable than having humans draw the sketches~\cite{luo2021fine}.
The fully differentiable pipeline can be easily extended to creative content creations beyond multi-view renderings,  such as text-to-3D sketch generation or 3D sketch video generation.
Furthermore, the compact structural proxy may assist other practical applications that require 3D geometric information.
Future works include using the approximate 3D shape to guide recognition, localization~\cite{liu2023limap,fabbri20103d}, few-shot NeRF~\cite{deng2022depth,wang2023sparsenerf}, or other 3D reconstruction tasks, where geometric or depth priors are known to accelerate and stabilize the outcome.

\paragraph{Limitations}
While 3Doodle can generate expressive sketches, the number of primitives is fixed as input parameters. 
It would be an interesting future work to automatically find the optimal number of strokes that balance the expressivity and sparsity, which may vary significantly depending on the subject and the applications.
In addition, the optimization of parameters requires a decent amount of computation time (at most 6 hours), although the final representation is compact and quick to render.
Lastly, our current formulation ignores the depth ordering between geometric primitives and visualizes the entire wireframe without occlusion.
Because we already have the volumetric representation of superquadradic, we can use its volume to account for occlusion and skip rendering textures on the other side of the volume (e.g., the faces of the snowman or teddy bear can be hidden from the rear view).

\begin{acks}
This work was supported by the NRF grants (No.RS-2023-00208197 (50\%) and No.2023R1A1C200781211(30\%)), IITP grant funded by the Korea government(MSIT) (No.2021-0-01343, AI Graduate School Program(Seoul National University)(10\%)) funded by the Korea government(MSIT), INMC, and the BK21 FOUR program of the Education and Research Program for Future ICT Pioneers, Seoul National University in 2024(10\%). Young Min Kim is corresponding author.
\end{acks}

\bibliographystyle{ACM-Reference-Format}
\bibliography{references}

\clearpage
\begin{figure*}
    \centering
    \includegraphics[width=0.98\linewidth]{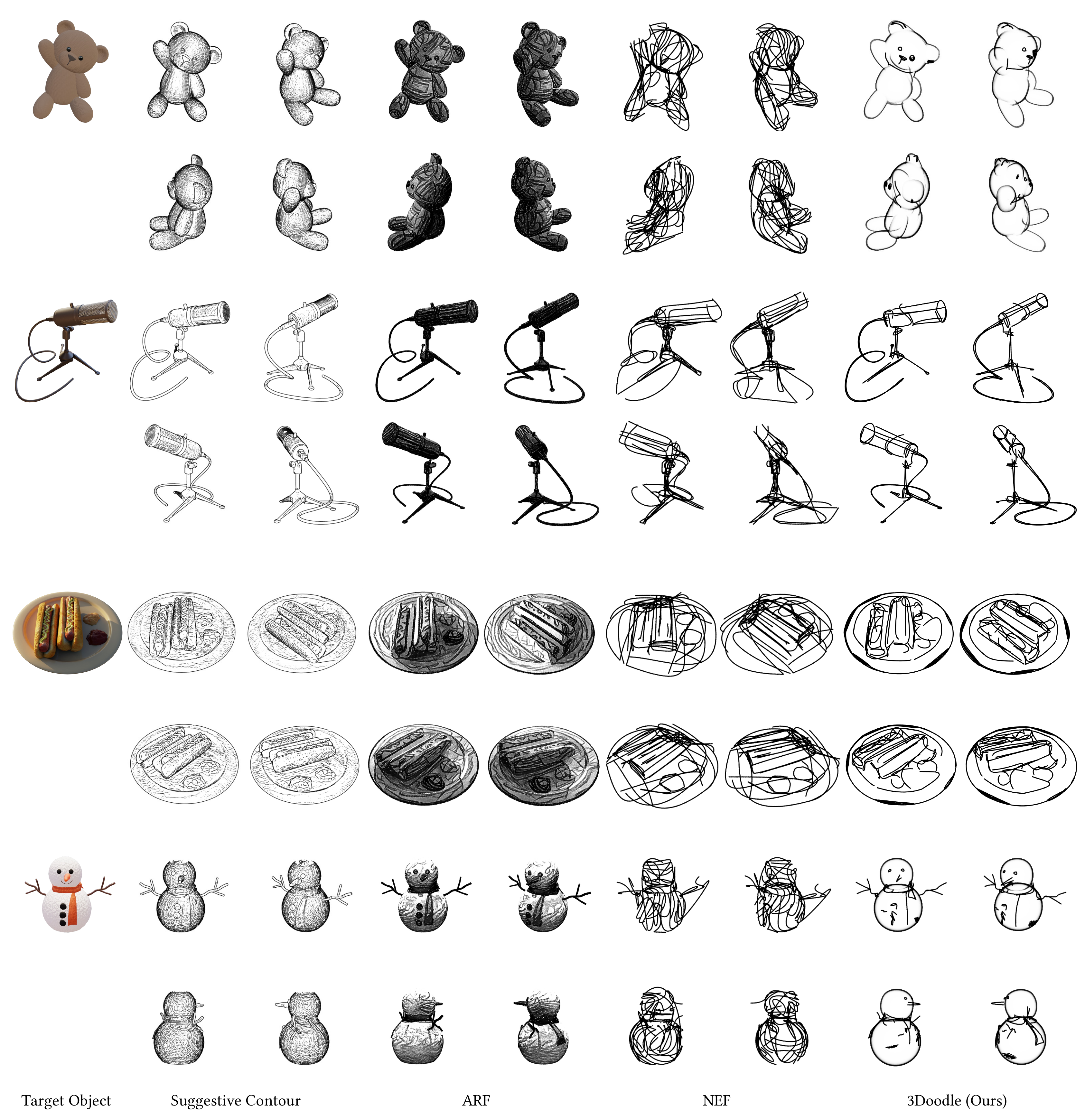}
    \caption{Qualitative comparison with baseline methods which have 3D representations.}
    \label{fig:comparison_3d_methods}
\end{figure*}

\begin{figure}
    \centering
    \includegraphics[width=0.98\linewidth]{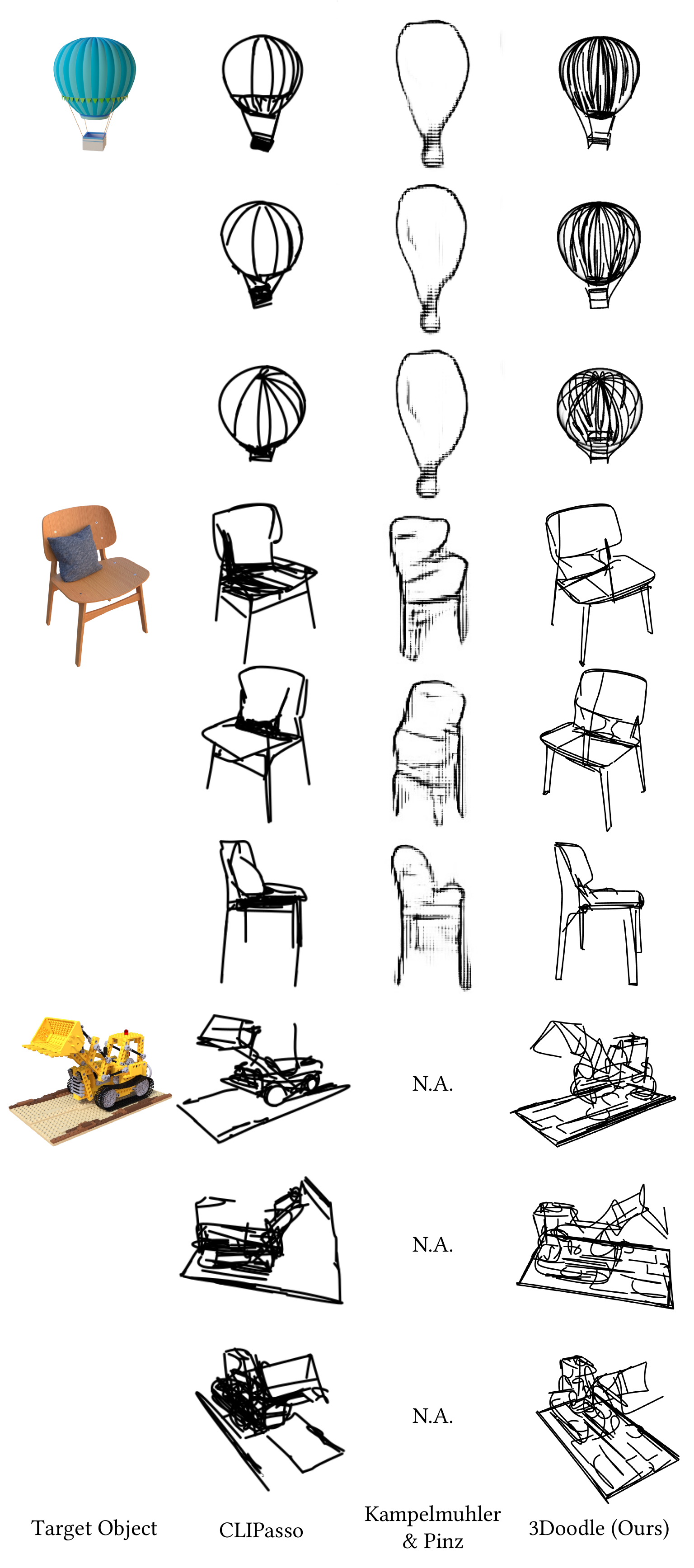}
    \caption{Qualitative comparison with image sketch generation methods.}
    \label{fig:comparison_2d_methods}
\end{figure}

\clearpage
\appendix
\section{Projection of 3D B\'ezier curve}
In this section, we provide a mathematical proof of 2D B\'ezier curve parameterized with orthogonally projected control points is identical to the orthogonally projected 3D B\'ezier curve in section~\ref{subsec:orthogonal_proj_proof} and we explain the approximation we use in perspective projection in section~\ref{subsec:perspective_proj_approx}.

\subsection{Orthogonal Projection}
\label{subsec:orthogonal_proj_proof}
We provide proof of~\cref{theo:orthographic} in this subsection.
\orthographic*
\renewcommand\qedsymbol{$\blacksquare$}
\begin{proof}
Without loss of generality, we can assume that the camera is looking at $z$ direction and the image plane is $xy$ plane at $z=0$.
Then, the projection of 3D B\'ezier curve $B^{\text{3D}}(t)=\sum_{j=0}^3 b_j(t)p^j$ on the $xy$ plane $\tilde{B}^{\text{2D}}$ is
\begin{equation}
    \tilde{B}^{\text{2D}}(t)=\sum_{j=0}^3b_j(t)\begin{pmatrix}p_x^j\\p_y^j\\0\end{pmatrix},
\end{equation}
where the control points of $B^{\text{3D}}$, $p^j=\begin{pmatrix}p_x^j\\p_y^j\\p_z^j\end{pmatrix}$ since $b_j(t)$ is scalar and the projection of any 3D points onto $xy$ plane is equal to simply removing the $z$ component.
As $q^j=\begin{pmatrix}p_x^j\\p_y^j\\0\end{pmatrix}$, $\tilde{B}^{\text{2D}}$ is identical to the 2D B\'ezier curve represented by the projected 3D control points of $B^{\text{3D}}$.
\end{proof}

\subsection{Perspective Projection}
In this subsection, we provide proof of~\cref{theo:perspective}, which describes the property of perspective projection of 3D cubic B\'ezier curve.
Then, we explain when our approximation is reasonable.
\label{subsec:perspective_proj_approx}
\begin{restatable}[]{thm}{perspective}
    \label{theo:perspective}
    Perspective projection of 3D B\'ezier curve $B^{\text{3D}}$ 
 on the image plane $\tilde{B}^{\text{2D}}$ is identical to the 2D rational B\'ezier curve $B^{\text{2D}}$ which is a cubic rational B\'ezier curve defined by $(q^0, q^1, q^2, q^3)$, where $q^j$ is a perspective projection of 3D control point $p^j$ of $B^{\text{3D}}$ and the weight associated to each control points is the depth from the camera center.
\end{restatable}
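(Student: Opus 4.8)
The plan is to carry out the same homogeneous-coordinate bookkeeping as in the orthographic case (\cref{theo:orthographic}), but now tracking the depth denominator introduced by perspective division. First I would fix a convenient camera frame: place the camera center at the origin with the optical axis along $z$ and the image plane at $z=1$ (a focal length $f$ only rescales the result and does not affect the argument). In this frame the perspective projection of a 3D point $p=(p_x,p_y,p_z)^T$ is the map $\pi(p)=(p_x/p_z,\,p_y/p_z)$, i.e. division of the lateral coordinates by the depth.

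Next I would substitute the curve into this projection. Writing $B^{\text{3D}}(t)=\sum_{j=0}^3 b_j(t)p^j$ and using that each $b_j(t)$ is a scalar, the projected curve is
\begin{equation}
\tilde{B}^{\text{2D}}(t)=\pi\!\left(\sum_{j=0}^3 b_j(t)p^j\right)=\frac{\sum_{j=0}^3 b_j(t)\,(p_x^j,p_y^j)}{\sum_{j=0}^3 b_j(t)\,p_z^j}.
\end{equation}
The essential observation is that this is already a ratio of two Bernstein combinations sharing a common nontrivial denominator, which is the signature of a \emph{rational} curve rather than a polynomial one; unlike the orthographic case, the denominator does not reduce to the constant $\sum_j b_j(t)=1$.

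Then I would compare against the definition of a 2D rational cubic Bézier curve with control points $q^j$ and scalar weights $w_j$, namely $B^{\text{2D}}(t)=\big(\sum_j b_j(t)\,w_j q^j\big)/\big(\sum_j b_j(t)\,w_j\big)$. Setting $q^j=\pi(p^j)=(p_x^j/p_z^j,\,p_y^j/p_z^j)$ and $w_j=p_z^j$, the numerator becomes $\sum_j b_j(t)\,p_z^j\cdot(p_x^j/p_z^j,\,p_y^j/p_z^j)=\sum_j b_j(t)\,(p_x^j,p_y^j)$ and the denominator becomes $\sum_j b_j(t)\,p_z^j$. These coincide term-by-term with the expression above, which proves the identity and simultaneously pins down the weights as the depths.

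I expect the main subtlety to be conceptual rather than computational: one must recognize that perspective projection is exactly the dehomogenization of the homogeneous control points $(p_x^j,p_y^j,p_z^j)$, so that the depths $p_z^j$ reappear precisely as the rational weights, and that the cancellation of $p_z^j$ in each numerator term is what forces the weights to be the depths and nothing else. I would also flag the mild regularity assumption that $\sum_{j=0}^3 b_j(t)\,p_z^j\neq 0$ for all $t\in[0,1]$ (equivalently, the curve lies strictly in front of the camera), which guarantees the projection and hence the rational form is well defined.
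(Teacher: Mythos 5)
Your proposal is correct and follows essentially the same argument as the paper's proof: substitute the Bernstein form of $B^{\text{3D}}$ into the perspective division, then recognize the resulting ratio as a rational cubic B\'ezier curve by factoring each numerator term as $q^j_{\cdot}\,p_z^j$ with weights $\omega_j=p_z^j$. The only differences are cosmetic (you normalize $f=1$ where the paper keeps a general focal length) or additive (your explicit regularity remark that $\sum_j b_j(t)\,p_z^j\neq 0$, which the paper leaves implicit).
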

\begin{proof}
Without loss of generality, we can assume that the camera is looking at $z$ direction and the image plane is $z=f$ ($f$ is focal length and the image plane is parallel to $xy$ plane).
Then, the perspective projection of 3D B\'ezier curve $B^{\text{3D}}(t)=\sum_{j=0}^3b_j(t)p^j$ on the image plane $z=f$ can be written as follows: 

\begin{align}
    \tilde{B}^{\text{2D}}(t)&=\begin{pmatrix}B^{\text{3D}}_x(t)\dfrac{f}{B^{\text{3D}}_z(t)}\\B^{\text{3D}}_y(t)\dfrac{f}{B^{\text{3D}}_z(t)}\end{pmatrix}
\end{align}
\begin{align}
    \phantom{\tilde{B}^{\text{2D}}(t)}&=\begin{pmatrix}
        \dfrac{\sum_{j=0}^3b_j(t)fp_x^j}{\sum_{j=0}^3b_j(t)p_z^j}\\
        \dfrac{\sum_{j=0}^3b_j(t)fp_y^j}{\sum_{j=0}^3b_j(t)p_z^j}
    \end{pmatrix}\\
    &=\begin{pmatrix}
        \dfrac{\sum_{j=0}^3b_j(t)\left(\frac{f}{p_z^j}p_x^j\right)p_z^j}{\sum_{j=0}^3b_j(t)p_z^j}\\
        \dfrac{\sum_{j=0}^3b_j(t)\left(\frac{f}{p_z^j}p_y^j\right)p_z^j}{\sum_{j=0}^3b_j(t)p_z^j}
    \end{pmatrix}\\
    &=\begin{pmatrix}
        \dfrac{\sum_{j=0}^3b_j(t)q_x^j\omega_j}{\sum_{j=0}^3b_j(t)\omega_j}\\
        \dfrac{\sum_{j=0}^3b_j(t)q_y^j\omega_j}{\sum_{j=0}^3b_j(t)\omega_j}
    \end{pmatrix},
\end{align}
when we set $\omega_j=p_z^j$.
Thus, $\tilde{B}^{\text{2D}}(t)$, which is the perspective projection of 3D cubic B\'ezier curve $B^{\text{3D}}$ is a rational B\'ezier curve defined by the perspective projected control points $q^j$ and the assigned weight is the depth from the camera center $p_z^j$.
\end{proof}

Following~\cref{theo:perspective}, our B\'ezier curve approximation of the rational B\'ezier curve is valid when the weights assigned to four control points are nearly the same.
Namely, our approximation is reasonable when the viewpoints are located sufficiently far from the interested objects.

\section{Other Details}
\subsection{Dataset}
\begin{figure}
    \centering
    \includegraphics[width=\linewidth]{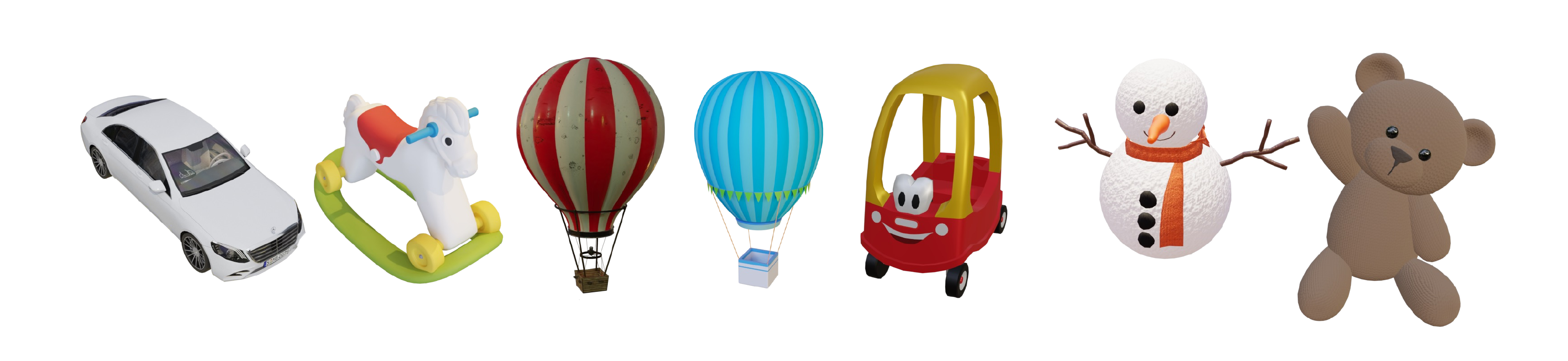}
    \caption{Samples from our new dataset}
    \label{fig:dataset}
\end{figure}
We test the performance of 3Doodle in [\textsc{Chair, Drums, Hotdog, Lego, Mic, Ship}] scenes in the synthetic blender dataset from Neural Radiance Fields~\cite{mildenhall2021nerf}, and [\textsc{Chair}] scene from Invrender~\cite{zhang2022invrender}.
We additionally collected synthetic scenes to evaluate 3Doodle in various object shapes.
We show the samples from our dataset in~\Cref{fig:dataset}.
We slightly modify the publicly available 3D models from various sources below:
\definecolor{codegreen}{rgb}{0,0.6,0}
\definecolor{backcolour}{rgb}{0.96,0.96,0.96}

\lstdefinestyle{egonerf}{
backgroundcolor=\color{backcolour},   
    commentstyle=\color{codegreen}\itshape,
    basicstyle=\ttfamily\scriptsize,
    breakatwhitespace=false,         
    breaklines=true,                 
    captionpos=b,                    
    keepspaces=true,                 
    numbers=none,                    
    numbersep=5pt,                  
    showspaces=false,                
    showstringspaces=false,
    showtabs=false,                  
    tabsize=2,
    emph={Basketball, Benz, Blueballoon, Redballoon, Snowman, Teddybear, Toycar, Toyhorse},
    emphstyle=\bfseries\color{codegreen},
    keywordstyle=none
}
\lstset{style=egonerf}
\lstinputlisting[language=python]{license.txt}

\subsection{Perceptual Study}
We provide details of our perceptual studies.
We collected responses from randomly selected 72 participants.
The participants are asked to answer the three questions corresponding to the randomly selected five objects and their sketches from baselines and our method.
The asked question is:
\begin{enumerate}
    \item How much do the images look like \textbf{free-hand sketches} drawn by humans?
    \item On a scale from 1 to 5, rate how \textbf{efficiently} (with a minimum number of elements) represents the target object.
    \item On a scale from 1 to 5, rate how the sketch represents the \textbf{essential structure} of the target object.
\end{enumerate}
Then, we ask the participants to evaluate the performance of consistency after watching the videos of rotating sketches for five randomly selected objects.
The asked question is:
\begin{enumerate}
    \item On a scale of 1 to 5, please rate how well the corresponding outputs maintain \textbf{consistency} from multiple viewpoints.
\end{enumerate}

\subsection{Implementation Details}
In this section, we provide more details regarding rendering and optimizing our sketch parameters.
\paragraph{Rendering 3D Strokes}
To render the B\'ezier curve, we use the fixed width for the strokes in all of our experiments.
However, na\"ive volume-rendering our $\sigma_\text{contour}$ leads non-uniform stroke widths in the $\epsilon_i$-$\alpha_i$ space.
The small $\alpha_i$ and $\epsilon_i$ far from 1 makes the stroke thinner.
Therefore, we set the width-related hyperparameters in $\mathcal{S}^\text{3D}_\text{dep}$ to be adaptive with respect to $\alpha_i$ and $\epsilon_i$:
\begin{equation}
    \gamma_\text{adaptive} = \begin{cases} \min\{f(\gamma_\text{min}, \gamma_\text{max}, \alpha_i), f(\gamma_\text{min}, \gamma_\text{max}, \epsilon_i)\}& \epsilon_i <= 1\\
    \min\{f(\gamma_\text{min}, \gamma_\text{max}, \alpha_i), \gamma_\text{max}\} & \epsilon_i > 1
    \end{cases},
\end{equation}
where
\begin{equation}
    f(t_\text{min}, t_\text{max}, x) = t_\text{min} + (t_\text{max} - t_\text{min}) \dfrac{x - 0.1}{0.9}.
\end{equation}
Similarly, 
\begin{equation}
    b_\text{adaptive} = \begin{cases} \min\{f(b_\text{min}, b_\text{max}, \alpha_i), f(b_\text{min}, b_\text{max}, \epsilon_i)\}& \epsilon_i <= 1\\
    \min\{f(b_\text{min}, b_\text{max}, \alpha_i), b_\text{max}\} & \epsilon_i > 1
    \end{cases},
\end{equation}
\begin{equation}
    a_\text{adaptive} = \begin{cases} \max\{g(a_\text{min}, a_\text{max}, \alpha_i),\\
    \qquad g(a_\text{min}, a_\text{max}, \epsilon_i)\}& \alpha_i, \epsilon_i > 0.3\\
    \max\{ a_\text{min}, a_\text{max}, \alpha_i) \} & \alpha_i > 0.3, \epsilon_i <= 0.3\\
    \max\{g(a_\text{min}, a_\text{max}, \epsilon_i) , a_\text{min}\} & \epsilon > 0.3, \alpha <= 0.3\\
    a_\text{min} & \alpha_i, \epsilon_i <= 0.3,
    \end{cases},
\end{equation}
where
\begin{equation}
    g(t_\text{min}, t_\text{max}, x) = t_\text{max} - (t_\text{max} - t_\text{min}) \dfrac{x - 0.1}{0.2}.
\end{equation}
\paragraph{Optimization}
When we optimize the scenes represented by both view-dependent sketches and view-independent sketches, we first optimize the superquadric parameters and then we optimize the B\'ezier curve parameters.
Also, we do not apply robust loss for scenes that are represented by view-independent components solely.
We initialize B\'ezier curves by sampling control points for the pre-defind number of stroke primitives.
The control points for the real-world scenes and the teddy bear are randomly sampled from a bounding box.
For all other objects, we use farthest point sampling on the SfM point cloud.
The centers of superquadrics are also initialized by applying the farthest point sampling on the SfM points.

\paragraph{Line SfM-based Initialization}
\begin{figure}
    \centering
    \includegraphics[width=\linewidth]{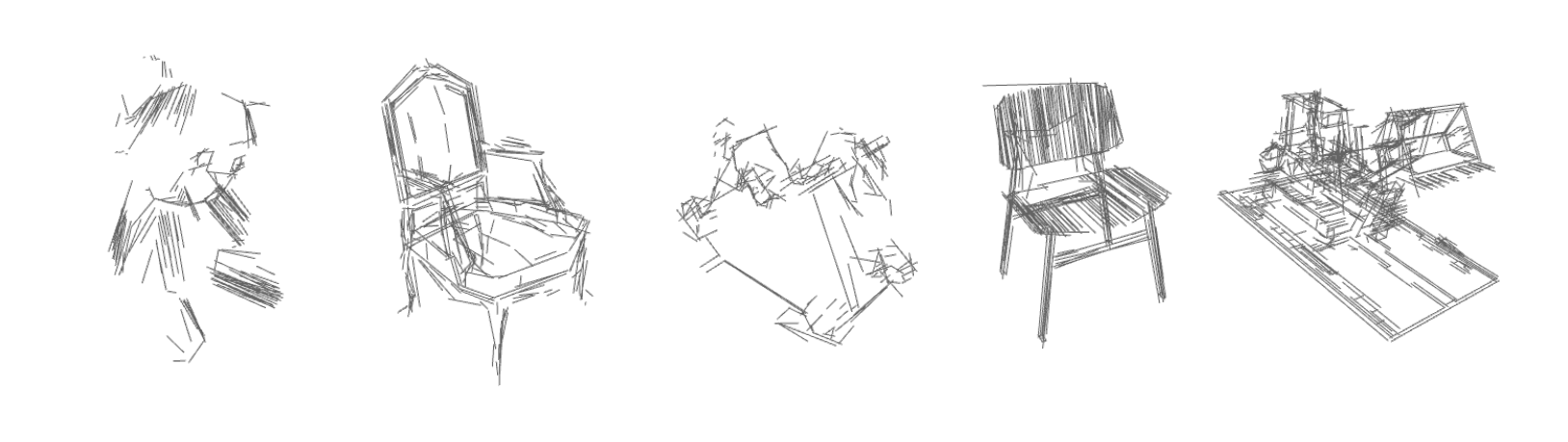}
    \caption{Reconstructed line maps for initialization.}
    \label{fig:linemap}
\end{figure}
In this subsection, we describe the details of our experiments of initialization method with line-based SfM.
First, we obtain the 3D line maps from the known camera poses by triangulating the detected and matched line segments using LIMAP library~\cite{liu2023limap}.
We show the reconstructed line maps in~\cref{fig:linemap}.
The line-based SfM method can reconstruct the 3D line maps containing the structural details when the view-consistent edges are detectable (e.g., two chairs and lego).
However, it struggles to reconstruct valid 3D line maps when the objects contain smooth contours (teddy bear and rocking pony), and consequently it leads to suboptimal results as demonstrated in~\cref{fig:initialization} of the main manuscript.
To initialize our sketch parameters with the linemaps, we first sample the line segments using the farthest sampling method based on the distance between lines.
After lines are sampled, we use the centers of lines as the initial center of superquadrics. We also extract random samples along the lines to initialize the four control points of B\`ezier curves.

\end{document}